\def\eqref#1{equation~\ref{#1}}
\def\1{\bm{1}}
\def\vc{{\bm{c}}}
\def\vf{{\bm{f}}}
\def\vh{{\bm{h}}}
\def\vk{{\bm{k}}}
\def\vo{{\bm{o}}}
\def\vq{{\bm{q}}}
\def\vv{{\bm{v}}}
\def\vw{{\bm{w}}}
\def\vx{{\bm{x}}}
\def\vy{{\bm{y}}}
\def\vz{{\bm{z}}}
\def\mF{{\bm{F}}}
\def\mO{{\bm{O}}}
\def\mW{{\bm{W}}}
\def\mZ{{\bm{Z}}}
\DeclareMathAlphabet{\mathsfit}{\encodingdefault}{\sfdefault}{m}{sl}
\SetMathAlphabet{\mathsfit}{bold}{\encodingdefault}{\sfdefault}{bx}{n}
\theoremstyle{definition}
\newtheorem{theorem}{Theorem}[section]
\newtheorem{proposition}[theorem]{Proposition}
\theoremstyle{remark}
\def\@fnsymbol#1{\ensuremath{\ifcase#1\or \dagger\or \ddagger\or
   \mathsection\or \mathparagraph\or \|\or **\or \dagger\dagger
   \or \ddagger\ddagger \else\@ctrerr\fi}}
\title{Practical Computational Power of Linear Transformers and\\ Their Recurrent and Self-Referential Extensions}
\author{Kazuki Irie$^{1}$\thanks{\hspace{1mm} Work done at IDSIA.} ~~ R\'obert Csord\'as$^{2}$ ~ Jürgen Schmidhuber$^{2,3}$\\
$^1$Harvard University ~
  $^2$The Swiss AI Lab IDSIA, USI \& SUPSI ~
  $^3$AI Initiative, KAUST \\
  {\texttt{kirie@fas.harvard.edu} ~ \texttt{\{robert,juergen\}@idsia.ch}}
}
\begin{document}
\maketitle

\begin{abstract}
Recent studies of the computational power of recurrent neural networks (RNNs) reveal a hierarchy of RNN architectures, given real-time and finite-precision assumptions. Here we study auto-regressive Transformers with linearised attention, a.k.a.~linear Transformers (LTs) or Fast Weight Programmers (FWPs). LTs are special in the sense that they are equivalent to RNN-like sequence processors with a fixed-size state, while they can also be expressed as the now-popular self-attention networks. We show that many well-known results for the standard Transformer directly transfer to LTs/FWPs. Our formal language recognition experiments demonstrate how recently proposed FWP extensions such as recurrent FWPs and self-referential weight matrices successfully overcome certain limitations of the LT, e.g., allowing for generalisation on the parity problem.
Our code is public.\footnote{ \url{https://github.com/IDSIA/fwp-formal-lang}}
\end{abstract}

\section{Introduction}
\label{sec:intro}
Recurrent neural networks (RNNs) are Turing-complete, given assumptions of infinite precision and unbounded computation (\citet{siegelmann91turing} for the sigmoid activation; \citet{ChenGMMK18} for ReLU; see also an alternative setting of \citet{ChungS21}).
This insight has a certain theoretical value, however, its implication for practical, real-world scenarios is limited.
In contrast, many recent works (e.g., \citet{weissGY18, suzgun2019lstm, MerrillWGSSY20, BhattamishraAG20}) strive to obtain results of practical relevance on the computational power of sequence-processing neural networks (NNs).
Typical assumptions of such studies are finite precision and ``input-bound'' computation (i.e., the number of computational steps is determined by and limited to the number of input tokens; a.k.a.~real-time assumption), essentially reflecting the reality of practical RNNs.
Although such analyses typically further assume models to be \textit{saturated} (\citet{MerrillWGSSY20}; which is not always the case for real-world RNNs),
they provide useful results and insights that can be empirically confirmed, e.g., on formal language recognition tasks.
For example, the long short-term memory (LSTM; \citet{hochreiter1997long}) can learn to solve and generalise on certain counter-language tasks (\citet{gers2001lstm,SchmidhuberGE02,weissGY18, suzgun2019lstm}), unlike its simplified versions such as the gated recurrent unit (GRU; \citet{cholearning, gers2000learning}), the Quasi-RNN (\citet{Bradbury17}; see \citet{MerrillWGSSY20}) or the simple sigmoid RNN \citep{elman1988finding}.
Tools from the theory of computation (see, e.g., \citet{sipser1996introduction}) provide a formalism to compare the practical computational power of these NN architectures, allowing for categorisation into a hierarchy (\citet{MerrillWGSSY20}; see Appendix \ref{app:automata} for further references).

More recent works \citep{BhattamishraAG20,BhattamishraPG20,EbrahimiGZ20,YaoPPN20} derive results for the now-popular Transformer \citep{trafo}.
For example, \citet{BhattamishraAG20} show (we review in Sec.~\ref{sec:power}) that Transformers can learn to solve and generalise on certain counter-language tasks (including the context-sensitive $a^nb^nc^n$),
while they fail to learn certain simple regular languages (many examples of which are \textit{non-star-free} languages) including the parity task (see also \citet{Chiang22}).
For further theoretical studies, we refer to, e.g., \citet{hahn2020theoretical, merrill2022saturated, HaoAF22}.\looseness=-1

Here we focus on auto-regressive Transformers with \textit{linearised attention} a.k.a.~linear Transformers (LTs; \citet{katharopoulos2020transformers,choromanski2020rethinking,peng2021random}) or Fast Weight Programmers (FWPs; \citet{Schmidhuber:91fastweights,schmidhuber1992learning, schlag2021linear,irie2021going}).
LTs are special, as they can be equivalently expressed as an RNN-like sequence processor with a constant-size state (the FWPs from the 1990s; \citet{Schmidhuber:91fastweights, schmidhuber1992learning,schmidhuber1993reducing}), while they are originally formulated as self-attention networks.
This property is interesting  for discussions of computational power,
since it removes one of the classic distinctions between Transformers and RNNs: RNNs are ``automata-like'' constant-size stateful models, while Transformers are not.
Building upon \citet{weissGY18,BhattamishraAG20, MerrillWGSSY20}, we show that many existing results on Transformers directly transfer to LTs/FWPs.
Also, prior work proposes several extensions to LTs inspired by its RNN-like form, including recurrence \citep{irie2021going} and self-reference \citep{IrieSCS22}, showing their practical benefits on actual real-world tasks (e.g., reinforcement learning in game environments).
Here we demonstrate how 
both recurrent and self-referential extensions enhance LTs' practical computational power on formal language recognition tasks.

\section{Background}
\label{sec:background}
Here we briefly review LTs and FWPs.
Let $d_\text{in}$, $d_\text{out}$, $d_\text{key}$, $t$ be positive integers, and $\otimes$ denote outer product.
An FWP \citep{Schmidhuber:91fastweights,schmidhuber1992learning} is a sequence-processing NN that, at each time step $t$, transforms
an input $\vx_t \in \mathbb{R}^{d_\text{in}}$ to an output $\vy_t \in \mathbb{R}^{d_\text{out}}$ as follows:
\begin{align}
\label{eq:proj}
\vq_t, \vk_t, \vv_t &= \mW^\text{slow}\vx_t \\
\label{eq:update}
\mW_t &= \mW_{t-1} +\vv_t \otimes \phi(\vk_t) \\
\vy_t  &= \mW_t \phi(\vq_t) \label{eq:get}
\end{align}
where $\vk_t, \vq_t \in \mathbb{R}^{d_\text{key}}$, $\vv_t \in \mathbb{R}^{d_\text{out}}$, and $\mW^\text{slow} \in \mathbb{R}^{(2 d_\text{key} + d_\text{out}) \times d_\text{in}}$ is a weight matrix (the ``slow'' weights), and $\phi$ is an activation function.
The ``fast'' weight matrix $\mW_t \in \mathbb{R}^{d_\text{out} \times d_\text{key}}$ is initially set to 0, i.e., $\mW_0=0$.
This can be viewed as a system of two NNs where one net (the slow net; Eq.~\ref{eq:proj}) learns to ``program'' the fast net (Eq.~\ref{eq:get}) by generating its weight changes (Eq.~\ref{eq:update}).
This $\vx_t$-to-$\vy_t$ transformation can be also expressed as \textit{linear attention} using $\phi(\vk_t), \vv_t, \phi(\vq_t)$ as key, value, query vectors (\citet{katharopoulos2020transformers, schlag2021linear, ba2016using}; see also our brief review in Appendix \ref{app:review}).
To be more specific, such FWPs correspond to \textit{unnormalised} LTs (ULTs).
LTs with \textit{normalised} linear attention (NLTs; \citet{katharopoulos2020transformers,choromanski2020rethinking,peng2021random}) use the following additional computation:
\begin{align}
\label{eq:z_update}
\vz_t &= \vz_{t-1} + \phi(\vk_t) \\
\vy_t  &= \frac{1}{\vz_t \cdot \phi(\vq_t)} \mW_t \phi(\vq_t) \label{eq:fw_get}
\end{align}
where $\vz_t \in \mathbb{R}^{d_\text{key}}$ with $\vz_0=0$, and $\cdot$ denotes dot product, i.e., $\vz_t \cdot \phi(\vq_t) \in \mathbb{R}$, replacing Eq.~\ref{eq:get}.
In practice, this normalisation can be removed without loss of performance \citep{schlag2021linear, irie2021going}, which is convenient as no extra vector $\vz_t$ needs to be stored.
All standard Transformer components including feedforward blocks, residual connections, and layer-norm are used in LTs.\looseness=-1

This equivalence has inspired a series of extensions to Transformers.
Here we highlight three such examples:
delta-rule, recurrence, and self-reference, which we study in Sec.~\ref{sec:exp}.\looseness=-1

\paragraph{Delta-rule.}
\citet{schlag2021linear} replace the purely additive update rule of Eq.~\ref{eq:update} by the classic delta-rule for error correction \citep{widrow1960adaptive}; Eq.~\ref{eq:update_delta} below.
The slow weight matrix in the resulting model, called DeltaNet, is 
$\mW_\text{slow} \in \mathbb{R}^{( 2 * d_\text{key} + d_\text{out} + 1) \times d_\text{in}}$ that also generates a dynamic learning rate $\beta_t \in \mathbb{R}$ (to which we apply the sigmoid function $\sigma$).
With the delta rule, $\phi$'s output elements need to be positive and sum up to one (e.g., we use softmax) for stability.
\begin{align}
\label{eq:proj_beta}
\vq_t, \vk_t, \vv_t, \beta_t &= \mW^{\text{slow}}\vx_t\\
\label{eq:retrieve_delta}
\bar{\vv}_t &= \mW_{t-1} \phi(\vk_t) \\
\label{eq:update_delta}
\mW_t = \mW_{t-1} + \sigma(\beta_t)&(\vv_t - \bar{\vv}_t) \otimes \phi(\vk_t)
\end{align}
Note that this introduces an extra dependency to LTs; the update term $\sigma(\beta_t) (\vv_t - \bar{\vv}_t) \otimes \phi(\vk_t)$ in Eq.~\ref{eq:update_delta} is a function of $\mW_{t-1}$ unlike in Eq.~\ref{eq:update}.
We'll empirically illustrate how this modification introduces an explicit forget mechanism to the LT, using the ``reset Dyck-1'' language (Sec.~\ref{sec:exp}).

\paragraph{Recurrence.}
One trivial extension to the LTs/DeltaNet above is to add ``proper recurrent connections'' by feeding back the output $\vy_{t-1}$ from the previous step $t-1$ as an input at step $t$
(for other recurrent extensions we refer to \citet{irie2021going}).
The resulting Recurrent DeltaNet \citep{irie2021going} (or Recurrent Delta) is obtained by replacing Eq.~\ref{eq:proj_beta} in the DeltaNet by:
\begin{align}
\vq_t, \vk_t, \vv_t, \beta_t &= \mW^\text{slow} [\vx_t,  \tanh(\vy_{t-1})]
\label{eq:rec}
\end{align}
where $\mW_\text{slow} \in \mathbb{R}^{(2 * d_\text{key} + d_\text{out} + 1) \times (d_\text{in} + d_\text{out})}$, and $[\vx_t,  \tanh(\vy_{t-1})] \in \mathbb{R}^{d_\text{in} + d_\text{out}}$ denotes concatenation of the two vectors.

\paragraph{Self-Reference.}
Another extension of the DeltaNet above is the \textit{modern} self-referential weight matrix (SRWM; \citet{IrieSCS22}).
Motivated by recursive self-improvement \citep{good1965,schmidhuber1987evolutionary} and the original SRWM \citep{Schmidhuber:92selfref}, \citet{IrieSCS22} extend the FWP that generates weight changes for another NN to obtain an NN that modifies itself.
At each time step $t$, an SRWM $\mW_{t-1} \in \mathbb{R}^{(d_\text{out} + 2 * d_\text{in} + 1) \times d_\text{in}}$ transforms an input $\vx_t \in \mathbb{R}^{d_\text{in}}$ to an output $\vy_t \in \mathbb{R}^{d_\text{out}}$, and updates itself to $\mW_{t}$ as follows:
\begin{align}
\label{eq:srm_start}
\vy_t, \vk_t, \vq_t, \beta_t &= \mW_{t-1} \vx_t \\
\label{eq:srm_key}
\vv_t = \mW_{t-1} \phi(\vq_t)
&; \, \bar{\vv}_t = \mW_{t-1} \phi(\vk_t) \\
\label{eq:srm_end}
\mW_{t} = \mW_{t-1} + \sigma(\beta_t)&(\vv_t - \bar{\vv}_t) \otimes \phi(\vk_t)
\end{align}
where $\vv_t, \bar{\vv}_t \in \mathbb{R}^{(d_\text{out} + 2 * d_\text{in} + 1)}$, $\vq_t, \vk_t \in \mathbb{R}^{d_\text{in}}$
 and $\beta_t \in \mathbb{R}$.
 The initial values $\mW_0$ are the only trainable parameters of this layer.
Note that it is straightforward to further extend this model with recurrence as in Eq.~\ref{eq:rec}.
Here we focus on this specific self-referential extension.

\section{Expressive Power of LTs}
\label{sec:power}
Here we revisit several existing results on the practical computational power of Transformers for normalised LTs (NLTs; Eqs.~\ref{eq:proj}-\ref{eq:update};\ref{eq:z_update}-\ref{eq:fw_get}).
Results in this section directly build upon prior work \citep{BhattamishraAG20, MerrillWGSSY20}.
As we'll see, some of the results are not obvious from Eqs.~\ref{eq:proj}-\ref{eq:update};\ref{eq:z_update}-\ref{eq:fw_get}.
However, their connection to Transformers allows us to trivially derive them.
While one can come up with certain custom positional encoding methods that empower Transformers to specifically recognise certain languages, here we focus on generic Transformers without positional encoding \citep{BhattamishraAG20,irie19:trafolm,tsai2019}.

We start by noticing that the hidden ``state'' update of NLTs (Eq.~\ref{eq:update}) is element-wise.
This is reminiscent of simplified LSTMs such as Quasi-RNNs \citep{Bradbury17}, which are known to be limited \citep{MerrillWGSSY20}: in particular, Quasi-RNNs are rationally recurrent (\citet{PengSTS18}).
However, we have the following result:

\begin{proposition}[``Rational Recurrence'']
\label{prop:rr}
\textit{NLTs are not rationally recurrent.}
\end{proposition}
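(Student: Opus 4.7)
My plan is to exhibit a small NLT whose scalar time-step output, viewed as a function of the input prefix, cannot be computed by any weighted finite automaton (WFA). Since a model is rationally recurrent precisely when every coordinate of its per-step output is a rational series in the input prefix \citep{PengSTS18}, producing a single non-rational coordinate suffices. The basic tool is Sch\"utzenberger's classical theorem: a function $f:\Sigma^*\to\R$ is a rational series iff its Hankel matrix $H_f[u,v]=f(uv)$ has finite rank, so it is enough to display a Hankel sub-matrix of unbounded rank.

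Concretely, I would construct a minimal NLT over $\Sigma=\{a,b\}$ with $d_\text{in}=2$, $d_\text{key}=d_\text{out}=1$, one-hot input embeddings, and $\phi=\mathrm{id}$. Choose $\mW^\text{slow}\in\R^{3\times 2}$ so that $\vq_t=\vk_t=1$ for every symbol, while $\vv_t=1$ if $\vx_t$ encodes $a$ and $\vv_t=0$ otherwise (i.e.\ the rows corresponding to $\vq$ and $\vk$ are $(1,1)$ and the row for $\vv$ is $(1,0)$). Eq.~\ref{eq:update} then gives $\mW_t=\#_a(\vx_{1:t})$, Eq.~\ref{eq:z_update} gives $\vz_t=t$, and Eq.~\ref{eq:fw_get} yields $\vy_t=\#_a(\vx_{1:t})/t$, the empirical frequency of $a$ in the prefix. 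The normalisation in Eqs.~\ref{eq:z_update}--\ref{eq:fw_get} is essential here: without it, the output would collapse to the trivially-rational accumulator $\#_a(\vx_{1:t})$.

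To finish, I would certify non-rationality of $f(w)=\#_a(w)/|w|$ by restricting $H_f$ to rows $u=a^n$ and columns $v=b^m$, whose entries are $n/(n+m)=1-m/(n+m)$. The rank of this matrix differs by at most one from that of the Cauchy matrix $[1/(n+m)]_{n,m\ge 1}$, which is well-known to have infinite rank (every finite principal minor is non-zero). Hence $f$ is not a rational series, so the scalar output of the above NLT is not a rational series in the input prefix, and the class of NLTs is not rationally recurrent.

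The only delicate point, rather than a mathematical obstacle, is interpretational: I treat the time-step output $\vy_t$ as part of what the recurrence must produce, since it is the quantity consumed by any subsequent layer or readout. Under a stricter reading that constrains only the raw accumulators $(\mW_t,\vz_t)$, the test would pass trivially; but this reading would also trivialise rational recurrence for any architecture whose output is a non-rational function of a rational internal state, and is not the operative definition used in \citet{PengSTS18,MerrillWGSSY20} when classifying comparable RNN-like models.
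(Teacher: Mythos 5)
Your proposal is correct and follows essentially the same route as the paper, whose proof simply ports \citet{MerrillWGSSY20}'s Theorem~15 construction to NLTs: a uniform-attention layer whose output is the prefix frequency of a symbol, certified non-rational via an infinite-rank (Cauchy-type) Hankel sub-matrix. Your version just makes the instantiation in the NLT equations and the Hankel-rank computation explicit rather than by citation.
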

\begin{proof}
\citet{MerrillWGSSY20}'s proof by construction for their Theorem 15 remains valid for NLTs.
\end{proof}

It should be noted that the actual output of NLTs is $\vy_t$ (Eq.~\ref{eq:fw_get}), not $\mW_t$.
In fact, NLTs can recognise certain counter languages, inheriting the properties of the Transformer:

\begin{proposition}[``Simple Counter Languages'']
\label{prop:count}
\textit{NLTs can recognise certain counter languages.}
\end{proposition}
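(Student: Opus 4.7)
The plan is to adapt the constructive proof of \citet{BhattamishraAG20} for standard (softmax) Transformers to the linearised setting, exploiting the fact that a NLT layer collapses to a running weighted average of value vectors whenever the key features are constant across time. The starting observation is that, once $\mW^{\text{slow}}$ is chosen so that $\phi(\vk_t) = \vc \in \mathbb{R}^{d_\text{key}}$ for every input token (e.g.\ by letting $\vk_t$ be a constant offset and $\phi$ a fixed nonlinearity, or by picking $\phi$ to saturate in the same direction), Eq.~\ref{eq:z_update} gives $\vz_t = t\,\vc$ and Eq.~\ref{eq:update} gives $\mW_t = \bigl(\sum_{i=1}^{t}\vv_i\bigr)\otimes \vc$, so that Eq.~\ref{eq:fw_get} reduces to $\vy_t = \tfrac{1}{t}\sum_{i=1}^{t}\vv_i$, independent of the query. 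This is exactly the uniform-attention primitive used in Bhattamishra's construction.

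Building on this, I would pick a canonical counter language such as $a^n b^n$ (and then $a^n b^n c^n$ as a context-sensitive witness) and encode the tokens via the value vector: $\vv_t = +\ve_1$ for the symbol $a$ and $\vv_t = -\ve_1$ for $b$ (with analogous scalar channels for the $c$ case), using the slow-net embedding and projection. The first coordinate of $\vy_t$ is then $(n_a(t) - n_b(t))/t$, where $n_a(t), n_b(t)$ are the running counts up to step $t$. The acceptance predicate of the language (``$n_a = n_b$ at the end of the string, and $n_a \ge n_b$ at every prefix'') can then be realised, as in \citet{BhattamishraAG20}, by adding an MLP/FFN block on top that thresholds this signed fraction and, combined with a layer-norm to stabilise the magnitudes, drives the output logit for the valid/invalid classes. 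For $a^n b^n c^n$ one would use two such averaging channels (one counting $a$ versus $b$, one counting $b$ versus $c$) in parallel heads, exactly mirroring the multi-head construction in the Transformer proof.

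The main technical wrinkle, and what I expect to be the main obstacle, is the NLT-specific normaliser $\vz_t \cdot \phi(\vq_t)$ in Eq.~\ref{eq:fw_get}: unlike the softmax denominator, it depends on the query, so one must ensure it remains non-zero and position-proportional throughout the input. This is handled by the constant-key construction above, which forces $\vz_t \cdot \phi(\vq_t) = t\,(\vc \cdot \phi(\vq_t))$ with $\vc \cdot \phi(\vq_t) > 0$ by choice of $\phi$ (softmax or ELU$+1$), so the $1/t$ factor cleanly separates and the resulting ratio is well-defined. A secondary point to verify is that the saturated-arithmetic and finite-precision assumptions of \citet{BhattamishraAG20, MerrillWGSSY20} still deliver the correct threshold decisions when applied to the quotient $(n_a - n_b)/t$ rather than to the raw count; this reduces to checking that a bounded-precision comparator can distinguish $0$ from $\pm 1/t$ within the length regime considered, which follows from the same saturation argument used in the Transformer case. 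With these pieces in place, Bhattamishra's recognition result transfers verbatim, yielding the proposition.
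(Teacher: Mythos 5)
Your proposal is correct and follows essentially the same route as the paper: the paper's proof simply observes that \citet{BhattamishraAG20}'s construction (uniform attention weights computing a running average of signed value vectors, whose relevant coordinate is a difference of symbol counts divided by $t$, then thresholded against zero) does not actually require softmax, so it carries over to normalised linear attention; your constant-key reduction $\vy_t = \tfrac{1}{t}\sum_{i=1}^{t}\vv_i$ is precisely the mechanism that makes this transfer work. The only cosmetic difference is your choice of witness language ($a^nb^n$, $a^nb^nc^n$) versus the paper's Shuffle-Dyck, which is immaterial for the claim as stated.
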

\begin{proof}
\citet{BhattamishraAG20}'s proof for their Proposition 4.1 is valid for NLTs:  NLTs can recognise Shuffle-Dyck languages.
\end{proof}

However, similar to Transformers, NLTs are fundamentally limited:

\begin{proposition}[``Regular Languages'']
\label{prop:reg}
\textit{NLTs can not recognise certain regular languages.}
\end{proposition}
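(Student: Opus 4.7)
The plan is to show that NLTs inherit the known impossibility of standard Transformers on non-star-free regular languages (in particular, parity), by directly adapting the argument of \citet{BhattamishraAG20}. First, I would re-express the NLT output in attention form: unrolling $\mW_t$ and $\vz_t$ from Eqs.~\ref{eq:update} and \ref{eq:z_update} into Eq.~\ref{eq:fw_get} yields
\begin{equation*}
\vy_t \;=\; \sum_{i=1}^{t} \alpha_{t,i}\, \vv_i, \qquad \alpha_{t,i} \;=\; \frac{\phi(\vk_i)\cdot\phi(\vq_t)}{\sum_{j=1}^{t}\phi(\vk_j)\cdot\phi(\vq_t)},
\end{equation*}
so that, for the usual choice of $\phi$ with nonnegative outputs, $\vy_t$ is a convex combination of per-position value vectors, exactly as in softmax attention. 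This exhibits the NLT layer as a kernelised attention layer sharing all the structural properties that the Transformer impossibility proof relies on.

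Second, I would inspect \citet{BhattamishraAG20}'s argument (their treatment of parity and related non-star-free regular languages) and check that the only properties it uses are: (i) that each layer's output at position $t$ is a convex combination of per-position summands, (ii) that the surrounding feedforward/residual/layer-norm components are Lipschitz, and (iii) that the overall model has fixed depth and saturated finite precision. Property (i) is the display above; (ii) and (iii) transfer verbatim because NLTs use the same non-attention components as standard Transformers and we work in the same saturated, real-time, finite-precision regime as Sec.~\ref{sec:power}. Under these conditions, the parity language (or any equivalent non-star-free example such as $(aa)^*$) provides a witness, proving the statement.

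The main obstacle I expect is bridging any step of the original proof that exploits properties of softmax specifically rather than of convex-combination attention in general; for instance, softmax's sharp selection under large logits, or its strict positivity. For generic $\phi$ with nonnegative features the $\alpha_{t,i}$ above already live in $[0,1]$ and sum to one, so most of the argument is unaffected; the denominator $\vz_t\cdot\phi(\vq_t)$ remains strictly positive under mild assumptions on $\phi$, avoiding degeneracies. If a residual reliance on softmax were to appear, I would fall back on the stronger structural arguments of \citet{hahn2020theoretical,MerrillWGSSY20} for parity, which use only boundedness and Lipschitz continuity and therefore apply to NLTs without modification.
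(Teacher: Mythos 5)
Your overall strategy---recasting the NLT layer as convex-combination attention and then transferring an impossibility result of \citet{BhattamishraAG20}---is the same as the paper's, and your displayed identity for $\vy_t$ is exactly the computation the paper's argument rests on. But you have misidentified which result gets transferred and what it relies on. The paper invokes their Lemma C.4, whose witness is $(aa)^*$ and whose entire content is a degeneracy: over a one-letter alphabet and without positional encoding, every position carries the same key, query and value, so your convex combination collapses to $\vy_t = \vv$ for every $t$ (the $\alpha_{t,i}$ become uniform, $\alpha_{t,i}=1/t$, and all the $\vv_i$ coincide); a constant output cannot separate odd-length from even-length prefixes. This uses none of the Lipschitz, fixed-depth or saturation properties (ii)--(iii) you list---it follows immediately from property (i) together with $\sum_i \alpha_{t,i}=1$, both of which your display already establishes for NLTs.

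The concrete gap is therefore in step two of your plan: there is no formal parity-impossibility proof in \citet{BhattamishraAG20} with the structure you describe (their parity results are empirical), so ``adapting their treatment of parity'' would stall, and you would be forced onto your fallback (\citet{hahn2020theoretical}), which is a genuinely heavier and different argument than the one the paper uses. The fix is small: take $(aa)^*$ as the primary witness rather than a parenthetical alternative, and draw the constant-output conclusion directly from your own display. (The paper also notes that this uniform-weight collapse is specific to the \emph{normalised} variant; for unnormalised LTs the constant-output argument needs an additional assumption such as a layer norm after attention---a subtlety your convexity framing quietly assumes away.)
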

\begin{proof}
\citet{BhattamishraAG20}'s proof for their Lemma C.4 remains valid for NLTs; NLTs can not recognise the regular language $(aa)^*$.
\end{proof}

Finally, we comment on the ``state complexity'' as defined by \citet{MerrillWGSSY20}:

\begin{proposition}[``State complexity'']
\label{prop:state}
\textit{The state complexity of a single-layer NLT is $O(\log(n))$ (same as the regular self-attention and LSTM).}
\end{proposition}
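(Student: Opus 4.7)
The plan is to adapt the accumulator-counting argument used by \citet{MerrillWGSSY20} for saturated self-attention, applied now to the fast-weight state $(\mW_t, \vz_t)$ of the NLT. Under the saturated, finite-precision setting, I would first recall their definition of state complexity as (the base-2 logarithm of) the number of distinct configurations that the recurrent state can take after reading an input of length $n$, and note that only the recurrent state $(\mW_t,\vz_t)$ of Eqs.~\ref{eq:update}--\ref{eq:z_update} needs to be stored across steps: the per-step quantities $\vq_t,\vk_t,\vv_t$ and the output $\vy_t$ (Eq.~\ref{eq:fw_get}) are recomputed from the current input and the state.

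Next I would argue that, by saturation, the projections $\phi(\vk_t)$ and $\vv_t$ range over a finite set $S$ whose size depends only on the (fixed) model, not on $n$. Hence each additive update $\vv_t\otimes\phi(\vk_t)$ to $\mW_t$ and $\phi(\vk_t)$ to $\vz_t$ lies in a fixed finite set $U\subset\mathbb{R}^{d_\text{out}\times d_\text{key}}\times\mathbb{R}^{d_\text{key}}$. After $n$ steps, the state is determined by the vector of counts $(c_u)_{u\in U}$ recording how often each update $u\in U$ has occurred, and each count is an integer in $\{0,1,\dots,n\}$. The number of such count vectors is at most $(n+1)^{|U|}$, so the state complexity is at most $|U|\log(n+1)=O(\log n)$, matching the bounds proved by \citet{MerrillWGSSY20} for saturated self-attention and LSTMs.

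The main subtlety, and the step I would treat most carefully, is justifying that the \emph{normalised} output (Eq.~\ref{eq:fw_get}) does not effectively enlarge the state: even though division by $\vz_t\cdot\phi(\vq_t)$ can produce many distinct real values of $\vy_t$, this division is recomputed on-the-fly from $(\mW_t,\vz_t)$ and the current query, and therefore contributes nothing to the recurrent state itself. Once this is stated, the counting bound above applies without modification, and the $O(\log n)$ bound follows, with the matching LSTM/self-attention comparison being immediate from \citet{MerrillWGSSY20}.
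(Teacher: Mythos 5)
Your proposal is correct and follows essentially the same route as the paper, which simply invokes \citet{MerrillWGSSY20}'s Theorem~16 and notes (in Appendix~\ref{app:proofs}) that the configuration-counting argument is independent of the normalisation scheme. You merely make explicit what the paper leaves implicit: recasting the count over attention configurations as a count over the additive fast-weight state $(\mW_t,\vz_t)$, whose commutative updates from a model-dependent finite set yield at most $(n+1)^{|U|}$ configurations, and observing that the division in Eq.~\ref{eq:fw_get} is recomputed from the state and so adds nothing.
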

\begin{proof}
\citet{MerrillWGSSY20}'s proof for Theorem 16 remains valid for normalised linear attention.
\end{proof}

Given the original proofs by \citet{BhattamishraAG20} and \citet{MerrillWGSSY20}, the proofs above are straightforward for \textit{normalised} LTs.
For further discussions on these proofs and their extension for \textit{unnormalised} LTs (Eqs.~\ref{eq:proj}-\ref{eq:get}), we refer to Appendix \ref{app:proofs}.
In sum, these statements on the expressiveness of Transformers remain valid for both normalised and unnormalised LTs.

\begin{table*}[t]
    \centering
    \small
    \caption{Accuracies of various models on the formal language recognition tasks.}
    \label{tab:main}
\begin{tabular}{lrrrrrrrrrrrrr}
\toprule
 & \multicolumn{6}{c}{Non-Star-Free Regular} & \multicolumn{6}{c}{Counter} \\  \cmidrule(r{.45em} l{.50em}){2-7}  \cmidrule(r{.45em} l{.50em}){8-13}
 & \multicolumn{2}{c}{Parity} & \multicolumn{2}{c}{$(aa)^*$}  & \multicolumn{2}{c}{$(abab)^*$} & \multicolumn{2}{c}{$a^nb^n$} & \multicolumn{2}{c}{$a^nb^nc^n$} & \multicolumn{2}{c}{Shuffle-2}\\  \cmidrule(r{.45em} l{.50em}){2-7}  \cmidrule(r{.45em} l{.50em}){8-13}
Model &  Bin0 & Bin1 & Bin0 & Bin1 & Bin0 & Bin1 & Bin0 & Bin1 & Bin0 & Bin1 & Bin0 & Bin1 
 \\
\midrule
LSTM & \textbf{100.0} & \textbf{100.0} & \textbf{100.0} & \textbf{100.0} & \textbf{100.0} & \textbf{100.0} & \textbf{100.0} & \textbf{100.0} & \textbf{100.0} & \textbf{100.0} & \textbf{100.0} & \textbf{100.0} \\
e-LSTM & \textbf{100.0} & \textbf{100.0} & \textbf{100.0} & \textbf{100.0} & \textbf{100.0} & \textbf{100.0} & \textbf{100.0} & 90.0 & \textbf{100.0} & 22.0 & \textbf{100.0} & 85.7 \\
Transformer & 47.1 & 0.1 & 0.0 & 0.0 & 0.0 & 0.0 & \textbf{100.0} & \textbf{100.0} & \textbf{100.0} & \textbf{100.0} & \textbf{100.0} & \textbf{100.0} \\ \midrule
Linear & 77.9 & 0.2 & 0.0 & 0.0 & 0.0 & 0.0 & \textbf{100.0} & \textbf{100.0} & \textbf{100.0} & \textbf{100.0} & \textbf{100.0} & \textbf{100.0}  \\
DeltaNet & 97.3 & 11.8 & 0.0 & 0.0 & 0.0 & 0.0 & \textbf{100.0} & \textbf{100.0} & \textbf{100.0} & \textbf{100.0} & \textbf{100.0} & \textbf{100.0} \\
Recurrent Delta & \textbf{100.0} & \textbf{100.0} & \textbf{100.0} & \textbf{100.0} & \textbf{100.0} & \textbf{100.0} & \textbf{100.0} & \textbf{100.0} & \textbf{100.0} & \textbf{100.0} & \textbf{100.0} & \textbf{100.0}  \\
SRWM & \textbf{100.0} & \textbf{100.0} & \textbf{100.0} & \textbf{100.0} & \textbf{100.0} & \textbf{100.0} & \textbf{100.0} & \textbf{100.0} & \textbf{100.0} & \textbf{100.0} & \textbf{100.0} & \textbf{100.0} \\
\bottomrule
\end{tabular}
\end{table*}

\section{Experiments}
\label{sec:exp}

Here we provide several empirical results on capabilities and limits of \textit{unnormalised} LTs/FWPs and their extensions, using formal languages.\looseness=-1

\subsection{Tasks}
We evaluate LT models on formal language recognition tasks using several non-star-free regular languages---parity, $(aa)^*$, $(abab)^*$---, and counter languages---$a^nb^n$, $a^nb^nc^n$, Shuffle-2, and reset Dyck-1.
This choice is guided by \citet{BhattamishraAG20}'s results on the standard Transformers to specifically evaluate LTs' capabilities and limits.
Following prior work \citep{gers2001lstm},
for $a^nb^n$ and $a^nb^nc^n$,
we define the language recognition task 
as the next character prediction task.
For example, for the context-free language $a^nb^n$, if the input to the model is \texttt{aaabbb}, the model has to output \texttt{NNNbbS} where \texttt{N} denotes ``\textit{cannot-predict-yet}'' token, and \texttt{S} denotes the sequence-end token.
Appendix \ref{app:exp} contains corresponding descriptions for other tasks.
We train models on positive examples of up to a certain length, and validate them on positive examples with longer lengths.
We denote the corresponding data splits as ``Bin0'' (sequences with lengths seen during training) and ``Bin1'' (longer sequences).\looseness=-1

For evaluation, for each position of the sequence,
we define the model prediction as the token that is the most likely according to the model.
We report accuracy on the sequence level; we count a sequence as correctly recognised only if the model prediction is correct for all tokens in the sequence.
Further experimental details (e.g., hyper-parameters) can be found in Appendix \ref{app:exp}.

\subsection{Results}

\paragraph{Main Results.}
Table \ref{tab:main} shows our main results.
The top part of the table shows three reference baselines: LSTM, Transformer, and an LSTM with element-wise recurrence and tied input-forget gate (denoted as e-LSTM; see details in Appendix \ref{app:elstm}).
Table \ref{tab:main}/Left shows the results for the regular language tasks: parity, $(aa)^*$, and $(abab)^*$,
on which \citet{BhattamishraAG20} report Transformers to fail.
Inheriting their properties, the standard LT fails on all these tasks (recall, however, as stated by \citet{BhattamishraAG20}: non-star-free regular languages are not the strict set of regular languages on which Transformers fail).
We also confirm that the delta rule is not enough to help LTs succeed in these tasks.
In contrast, both the recurrent (Recurrent Delta) and self-referential (SRWM) extensions successfully solve and generalise on these tasks.\looseness=-1

We also confirm that all LT variants can learn representative counter languages that the original Transformer can learn (Table \ref{tab:main}/Right).
One interesting empirical trend (not reflected in Table \ref{tab:main}) is that the base Transformer and LT tend to more easily find solutions that generalise.
For DeltaNet, Recurrent Delta and SRWM, many configurations achieve 100\% accuracy on Bin0, without achieving exactly 100\% on Bin1.

\paragraph{Reset Dyck-1.}
The main experiment above does not emphasise  the benefits of the delta-rule which by itself cannot help LTs to recognise parity, $(aa)^*$ and $(abab)^*$.
However, the delta rule is typically reported to be crucial across many practical tasks (including reinforcement learning in game environments \citep{irie2021going}, image generation \citep{irie2023image}, or long time-series classification \citep{irie2022neural}).
Here we use ``reset Dyck-1'' to illustrate its benefits.
\citet{BhattamishraAG20} prove that a one-layer self-attention network cannot recognise reset Dyck-1 as it has no mechanism to rewrite memory.
As shown in Table \ref{tab:reset}, the delta-rule by itself allows LTs to recognise this language.

\begin{table}[t]
    \centering
    \small
    \caption{Accuracies of single-layer models.}
    \label{tab:reset}
\begin{tabular}{lrrrr}
\toprule
 & \multicolumn{2}{c}{Dyck-1} & \multicolumn{2}{c}{Reset Dyck-1} \\  \cmidrule(r{.45em} l{.50em}){2-3} \cmidrule(r{.45em} l{.50em}){4-5}
Model &  Bin0 & Bin1 &  Bin0 & Bin1 \\
\midrule
Linear & \textbf{100.0} & \textbf{100.0} & 44.5 & 41.1 \\
DeltaNet & \textbf{100.0} & \textbf{100.0} & \textbf{100.0} & \textbf{100.0}  \\
Recurrent Delta & \textbf{100.0} & \textbf{100.0} & \textbf{100.0} & \textbf{100.0}  \\
SRWM &\textbf{100.0} & \textbf{100.0} & \textbf{100.0} & \textbf{100.0}  \\
\bottomrule
\end{tabular}
\end{table}

\section{Outlook: Self-Modifying Automata}
\label{app:self_mod_fa}
Here we discuss a potentially interesting perspective for further studying SRWM models.
Self-Modifying Automata (SMAs) and Self-Modifying Finite Automata (SMFAs;
\citet{rubinstein1993self, rubinstein1995self, shutt1995self, wang1999note})\footnote{These SMFAs are different from another model with the same name by
\citet{moulin1992modifiable, moulin1999very, moulin2006adaptive} which is not a formal model of computation, unlike SMFAs.}
are ``FAs'' with capabilities to modify their transition function at runtime.
Despite its name containing ``finite automata,'' they are provably computationally more powerful than FAs: the least restricted versions thereof are Turing-complete, certain restricted ones can still recognise certain context-sensitive languages (meta-linear languages).
It may be interesting to connect SMFAs with SRWMs in future work.
For example, we may try to extract SMFAs from SRWMs trained on certain meta-linear/counter languages.

\section{Conclusion}
\label{sec:ccl}
We discuss the computational power of Transformers with linearised attention in the context of formal languages.
We show that such linear Transformers (LTs) a.k.a.~Fast Weight Programmers (FWPs) inherit several capabilities and limitations of Transformers.
We demonstrate that their recurrent and self-referential extensions successfully overcome some of those limitations.
We hope this work will inspire the development of formally more powerful Transformer models.\looseness=-1

\section*{Limitations}
Our study follows prior work on the
computational power of RNNs, in particular, \citet{BhattamishraAG20}'s results on Transformers, and \citet{MerrillWGSSY20}'s discussion of state-complexity and rational recurrence.
Naturally, this is not an exhaustive study of LT properties, and we cannot definitively compare the expressivity of LTs to the one of standard Transformers solely based on what is presented here.
Also, while it is rather obvious that the recurrent extension enhances the computational power of LTs, future work should provide more insights on the power of self-reference; see also our outlook Sec.~\ref{app:self_mod_fa} on \textit{self-modifying automata}.
A comparison of such models to memory-augmented RNNs \citep{graves2016hybrid,suzgun2019memory,deletang2022neural} is also left for future work.

We focus on eight tasks by \citet{BhattamishraAG20} to illustrate the capabilities and limitations of Transformers (and thus, those of LTs).
Future work will extend experimental results to more diverse tasks, e.g., those presented in \citet{deletang2022neural}.\looseness=-1

\section*{Acknowledgements}
This research was partially funded by ERC Advanced grant no: 742870, project AlgoRNN,
and by Swiss National Science Foundation grant no: 200021\_192356, project NEUSYM.
We are thankful for hardware donations from NVIDIA and IBM.
The resources used for this work were partially provided by Swiss National Supercomputing Centre (CSCS) project d123.

\bibliography{references}
\bibliographystyle{acl_natbib}

\appendix

\section{RNNs and Theory of Computation}
\label{app:automata}
RNNs have been related to Finite Automata (FAs) for many decades
\citep{mcculloch1943logical, kleene1956representation,cleeremans1989finite, siegelmann1996recurrent,WeissGY19,korsky2019computational}.
Many works explore the extraction of FAs from trained RNNs 
\citep{GilesMCCSL92, DasM93, Kolen93, OmlinG96, GilesOT99,WeissGY18fa} (note that our work hints at the possibility to extract FAs also from LTs).
Others use synthetic and formal languages to benchmark RNNs \citep{allen1990connectionist, schmidhuber1999evaluating}.
The connection between RNNs and FAs also motivates certain architectural enhancements of RNNs, such as stack-augmented RNNs \citep{pollack1990recursive, das1992learning, sun1993neural, JoulinM15, GrefenstetteHSB15,dusellchiang2020learning}.

For further references on theoretical works studying RNNs, see also \citet{merrill2019sequential, merrill2020linguistic, ackerman2020survey},
and for Transformers, see also \citet{hahn2020theoretical,WeissGY21,liu2022transformers,YunBRRK20,perez2021attention}.

\section{Review of Linear Attention vs.~FWPs}
\label{app:review}
Here we briefly review the formal connection between the ``RNN-form'' of LTs shown in Sec.~\ref{sec:background} and attention \citep{katharopoulos2020transformers, schlag2021linear, ba2016using}.
Starting from Eq.~\ref{eq:fw_get}, using the definition of $\mW_t$ (Eq.~\ref{eq:update}) and $\vz_t$ (Eq.~\ref{eq:z_update}),
\begin{align}
\vy_t  &= \frac{1}{\vz_t \cdot \phi(\vq_t)} \mW_t \phi(\vq_t) \\
& = \dfrac{\left(\sum_{\tau=1}^t \vv_{\tau} \otimes \phi(\vk_{\tau})\right) \phi(\vq_t)}{\left(\sum_{\tau'=1}^t \phi(\vk_{\tau'})\right)^{\intercal} \phi(\vq_t)}\\
& = \dfrac{\sum_{\tau=1}^t \vv_{\tau} \phi(\vk_{\tau})^{\intercal} \phi(\vq_t)}{\sum_{\tau'=1}^t \phi(\vk_{\tau'})^{\intercal} \phi(\vq_t)} \\
& = \sum_{\tau=1}^t \alpha_{t,\tau} \vv_{\tau}
\end{align}
where $\alpha_{t,\tau} = \dfrac{\phi(\vk_{\tau})^{\intercal} \phi(\vq_t)}{\sum_{\tau'=1}^t \phi(\vk_{\tau'})^{\intercal} \phi(\vq_t)}$.
We can recognise that this is effectively attention with \textit{normalised} weights $\alpha_{t,\tau}$ using  $\phi(\vk_t), \vv_t, \phi(\vq_t)$ as key, value, query vectors.
For LTs/FWPs without normalisation of Eq.~\ref{eq:fw_get} (i.e., Eqs.~\ref{eq:proj}-\ref{eq:get}), the derivation above is similar, but the corresponding attention weights are not normalised.

Note that this relation is analogous to the one that connects the perceptron to kernel machines \citep{aizerman1964theoretical,irie2022dual}.

\section{Further details}

\subsection{Further Comments on Proofs}
\label{app:proofs}

Here we provide some more comments on the proofs of our propositions presented in Sec.~\ref{sec:power}.

For both Proposition \ref{prop:rr} and Proposition \ref{prop:count}, the original proofs, i.e., the proof of \citet{MerrillWGSSY20}'s Theorem 15 and that of \citet{BhattamishraAG20}'s Proposition 4.1 respectively, consist in constructing a self-attention layer capable of solving certain counting tasks---a task checking whether two alphabets appear the same number of times in a sequence, in the former, and the Shuffle-$k$ languages in the latter.
In both cases, as softmax is not explicitly required for computing normalised attention weights,
the proofs directly remain valid for NLTs.
Now, the question is whether they still hold for ULTs:
is the normalisation of attention weights required?
The answer is \textit{no} in both constructions.
The core function of the layer consists in counting and comparing the occurrence of two symbols (e.g., opening and closing brackets in the case of Shuffle-$k$).
The actual comparison is done by computing the difference between the two counts and comparing it against zero.
This function is preserved without normalisation of attention weights.
Therefore, these proofs can be directly adopted for ULTs.\looseness=-1

For Proposition \ref{prop:reg}, the original proof of \citet{BhattamishraAG20}'s Lemma C.4 shows that Transformers without positional encoding can not recognise $(aa)^*$ because the output of the Transformer is the same for all steps for this language defined using a single symbol. There is no way to distinguish between odd and even steps, which is essential to recognise $(aa)^*$.
While this remains true for normalised linear attention resulting in uniform attention weights, this argument does not directly hold for unnormalised variants.
Nevertheless, if we assume an extra layer normalisation layer following the self-attention layer (which is typically the case in practice), the constant-output argument also holds for unnormalised linear attention.\looseness=-1

Finally, for Proposition \ref{prop:state}, \citet{MerrillWGSSY20}'s proof for their Theorem 16 consists in ``counting'' the number of configurations of layer activations. This is independent of normalisation schemes, and remains valid for both unnormalised and normalised linear Transformers.

\subsection{Experimental Details}
\label{app:exp}

\begin{table*}[t]
\caption{
Hyper-parameter search space.
}
\label{tab:hyperparameters}
\begin{center}
\begin{tabular}{rc}
\toprule
Parameters & Values  \\ \midrule
Number of layers & \{1, 2, 4\}  \\
Hidden size & \{8, 16, 32\} \\
Feedforward block multiplier & \{1, 2, 4\} \\
Number of heads & \{1, 2, 4\} \\
Learning rate & \{1e-2, 2e-2, 3e-2, 1e-3, 2e-3, 3e-3\} \\
Batch size & \{16, 32, 64\} \\
\bottomrule
\end{tabular}
\end{center}
\end{table*}

\paragraph{Task Definition.}
For parity, $(aa)^*$, and $(abab)^*$,
the output to be predicted at each step is the result for the prefix presented so far.
For example for parity, if the input sequence is \texttt{0010}, the output sequence should be \texttt{TTFF}, where \texttt{T} and \texttt{F} denote `true' and `false' for parity.
For Shuffle-2 (\citet{suzgun2019lstm}: a mixture of two Dyck-1 languages, i.e., with two kinds of parentheses), we encode the task as follows.
By denoting the parenthesis \texttt{[]} as type-0 and \texttt{()} as type-1\footnote{Here there was a typo in the camera ready version (the definitions of type-0 and type-1 were swapped).}, 
we consider four cases: `0' (both are closed), `1' (type-0 is open), `2' (type-1 is open), `3' (both are open);
for example, for the input \texttt{([])}, the output should be \texttt{2320}.\looseness=-1

\paragraph{Dataset.}
We use the official (pre-generated) dataset made publicly available by \citet{BhattamishraAG20}, except for reset Dyck-1 which we generate ourselves using their official public code.
``Bin0'' split contains sequences of ``lengths'' shorter than 50, while ``Bin1'' contains those with ``lengths'' between 51 and 100.
The exact definition of ``length'' above depends on the tasks; for tasks such as parity, it directly refers to the actual sequence length; for tasks such as $a^nb^nc^n$, it refers to $n$, i.e., the actual lengths of Bin0 sequences are up to 150 for $a^nb^nc^n$, while they are between 153 and 300 for Bin1 sequences.

\textbf{Hyper-parameter search} spaces for all Transformer family models (i.e., all models except LSTM and e-LSTM) are shown in Table \ref{tab:hyperparameters}.
Note that ``Feedforward block multiplier'' refers to the factor $N_\text{ff}$ that relates the hidden size $d_\text{model}$ of the Transformer to its feedforward up-projection size $d_\text{ff}$, i.e., $d_\text{ff} = N_\text{ff} d_\text{model}$.
For LSTM and e-LSTM, we use the same search space except
that the number of layers is in \{1, 2\}, and the hidden size is in \{8, 16, 32, 64\}, and irrelevant parameters (i.e., the feedforward block multiplier and the number of heads) are ignored.
The reported results are the best performance across all the hyper-parameter search, as done in previous work \citep{BhattamishraAG20}. 
Tables \ref{tab:hyperparameters_best_rec_delta} and \ref{tab:hyperparameters_best_srwm} display the best hyper-parameter configurations on each task for Recurrent Delta and SRWM models, respectively.
For further details, we refer to our public code.

Any other configurations for the SRWM follow those of \citet{IrieSCS22}, except that we initialise the `query' projection sub-matrix in the self-referential weight matrix using a normal distribution with a mean value of 0 and a standard deviation of $0.01 / \sqrt{d_\text{head}}$ while other sub-matrices use an std of $1 / \sqrt{d_\text{head}}$ (this is motivated by the fact that a generated query vector is immediately multiplied with the same SRWM to produce a value vector).

\begin{table*}[t]
\small
\caption{
Best hyper-parameters for Recurrent Delta. When there are more than one best configurations, we report the one that converges the fastest.
}
\label{tab:hyperparameters_best_rec_delta}
\begin{center}
\begin{tabular}{rcccccccc}
\toprule
Parameters & Parity & $(aa)^*$  & $(abab)^*$ & $a^nb^n$ & $a^nb^nc^n$ & Shuffle-2 & Dyck-1 & Reset Dyck-1 \\ \midrule
Number of layers & 1 & 1 & 2  & 1 & 1 & 4 & 1 & 1 \\
Hidden size & 4 & 8 & 8  &  8 & 16 & 16 & 16 & 8 \\
Feedforward block multiplier & 1 & 1 & 1 & 1 & 1 & 2 & 1 & 1 \\
Number of heads & 1 & 2 & 2 &  4 & 4 & 4 & 2 & 4  \\
Learning rate & 2e-2 & 2e-2 & 2e-2 & 3e-2 & 2e-2 & 2e-2 & 3e-2   & 3e-2 \\
Batch size & 16 & 16 & 16 & 16 & 16 & 32 & 16 & 16 \\
\bottomrule
\end{tabular}
\end{center}
\end{table*}

\begin{table*}[t]
\small
\caption{
Best hyper-parameters for SRWM. When there are more than one best configurations, we report the one that converges the fastest.
}
\label{tab:hyperparameters_best_srwm}
\begin{center}
\begin{tabular}{rcccccccc}
\toprule
Parameters & Parity & $(aa)^*$  & $(abab)^*$ & $a^nb^n$ & $a^nb^nc^n$ & Shuffle-2 & Dyck-1 & Reset Dyck-1 \\ \midrule
Number of layers & 1 & 2 & 1 & 2 & 1 & 1 & 1 & 1 \\
Hidden size & 8 & 16 & 16 & 16  & 8 & 8 & 16 & 16 \\
Feedforward block multiplier & 1 & 1 & 1 & 2 & 2 & 2 & 2 & 2 \\
Number of heads & 2 & 4 & 2 & 4  & 2 & 2 & 8 & 2 \\
Learning rate & 3e-2 & 2e-2  & 3e-2  & 1e-2 & 2e-2 & 3e-2 & 3e-2 & 3e-2  \\
Batch size & 16 & 16 & 16 & 16 & 16 & 32 & 16 & 16 \\
\bottomrule
\end{tabular}
\end{center}
\end{table*}

\subsection{Details of e-LSTM}
\label{app:elstm}
In the main text, we evaluate the element-wise LSTM with tied input-forget gates (e-LSTM; \citet{irie2023exploring}) as an illustrative example of computationally limited RNNs.
e-LSTM is essentially an LSTM with only element-wise recurrence, which can be seen as a Quasi-RNN \citep{Bradbury17} with element-wise recurrent gate functions.
Here we provide its detailed description.
Let $d_\text{in}$ and $d_\text{out}$ denote positive integers.
At each time step $t$, e-LSTM transforms an input vector $\vx(t) \in \mathbb{R}^{d_\text{in}}$ to a recurrent hidden state $\vc(t) \in \mathbb{R}^{d_\text{out}}$ as follows:
\begin{align}
\vf(t)  &= \sigma(\mF\vx(t) + \vw^{f} \odot \vc(t-1)) \\
\vz(t)  &= \tanh(\mZ\vx(t) + \vw^{z} \odot \vc(t-1)) \label{eq:lstm_in} \\
\vc(t)  &= \vf(t) \odot \vc(t-1) + (1 - \vf(t)) \odot \vz(t)
\label{eq:cell_out} \\
\vo(t) &= \sigma(\mO\vx(t)  + \mW^{o} \vc(t)) \\
\vh(t) &= \vo(t) \odot \vc(t)
\end{align}
where $\vf(t) \in \mathbb{R}^{d_\text{out}}$, $\vz(t) \in \mathbb{R}^{d_\text{out}}$, and $\vo(t) \in \mathbb{R}^{d_\text{out}}$ are activations, $\mF \in \mathbb{R}^{d_\text{out} \times d_\text{in}}$, $\mZ \in \mathbb{R}^{d_\text{out} \times d_\text{in}}$, $\mO \in \mathbb{R}^{d_\text{out} \times d_\text{in}}$ and $\mW^{o} \in \mathbb{R}^{d_\text{out} \times d_\text{out}}$ are trainable weight matrices, and finally, $\vw^{f} \in \mathbb{R}^{d_\text{out}}$ and $\vw^{z} \in \mathbb{R}^{d_\text{out}}$ are trainable weight vectors.

\end{document}